\definecolor{mydarkblue}{rgb}{0,0.08,0.45}
\theoremstyle{plain}
\newtheorem*{proposition*}{Proposition}
\theoremstyle{definition}
\theoremstyle{remark}
\newcommand{\koerper}[1]{\mathbb{#1}}
\newcommand{\R}{\koerper{R}}
\newcommand{\E}{\koerper{E}}
\newcommand{\prob}[1]{\mathbb{#1}}
\newcommand{\Ex}{\prob{E}}                                      
\providecommand{\cond}{\,\vert\,}                               
\DeclarePairedDelimiterX{\scalp}[2]{\langle}{\rangle}{#1,#2}    
\DeclarePairedDelimiterX{\infdivx}[2]{(}{)}{%
  #1\,\delimsize\|\,#2}
\newcommand{\kldiv}{D_{\rm KL}\infdivx}                             
\begin{document}
\twocolumn[
\aistatstitle{Symmetric Equilibrium Learning of VAE\lowercase{s}}
\aistatsauthor{ Boris Flach \And Dmitrij Schlesinger \And  Alexander Shekhovtsov }
\aistatsaddress{ Czech Techn.~University in Prague \And  Dresden University of Technology \And Czech Techn.~University in Prague } ]

\begin{abstract}
We view variational autoencoders (VAE) as decoder--encoder pairs, which map distributions in the data space to distributions in the latent space and vice versa.
The standard learning approach for VAEs is the maximisation of the evidence lower bound (ELBO). It is asymmetric in that it aims at learning a latent variable model while using the encoder as an auxiliary means only. Moreover, it requires a closed form a-priori latent distribution. This limits its applicability in more complex scenarios, such as general semi-supervised learning and employing complex generative models as priors. We propose a Nash equilibrium learning approach, which is symmetric with respect to the encoder and decoder and allows learning VAEs in situations where both the data and the latent distributions are accessible only by sampling. The flexibility and simplicity of this approach allows its application to a wide range of learning scenarios and downstream tasks. 
\end{abstract}

\section{INTRODUCTION}
Variational autoencoders \citep{Kingma:ICLR14,Rezende:ICML14} are a well established and well analysed approach of learning latent variable models of the form $p(x) = \sum_{z} p(z) p(x|z)$. Given a distribution $\pi(x)$, $x\in\mathcal{X}$ in the data space and an assumed distribution $p(z)$, $z\in\mathcal{Z}$ in the latent space, a VAE combines a pair of parametrised distributions $p_\theta(x\cond z)$, $q_\varphi(z\cond x)$, which are usually modelled in terms of deep networks. The standard way to learn this encoder--decoder pair is to maximise the evidence lower bound of the data log-likelihood, 
 \begin{align}
   L_B(\theta, \varphi) = \E_{\pi(x)} \bigl[& \E_{q_\varphi(z\cond x)} \log p_\theta(x \cond z) \label{eq:elbo-1} \\
   & - \kldiv[\big]{q_\varphi(z\cond x)}{p(z)}\bigr]  \nonumber .
  \end{align}
 This learning formulation is particularly well suited to situations where only the generative model $p(x)$ is of interest.
 The research in this area in recent years has culminated in deep hierarchical VAEs \citep{Vahdat:NeurIPS2020} and diffusion models \citep{Ho:NeurIPS2020,Rombach:CVPR2022}, which can be viewed also as hierarchical VAEs. 
 The encoder's role is auxiliary in the ELBO, and it is even fixed to a simple noisy shrinkage in diffusion models. However, a learned encoder is often of interest in applications on its own --- it can provide compact representations, useful for downstream tasks (\eg for semantic hashing,~\citealt{Dadaneh-20}).
 Furthermore, while only samples from $\pi(x)$ are needed in~\eqref{eq:elbo-1}, an explicit model of $p(z)$ is required in order to compute (and differentiate) the KL-divergence term. Although solutions to the latter problem have been proposed, they come with some other limitations (discussed in detail in \cref{sec:related}).

 The asymmetries of the standard VAE learning approach pointed above make it difficult to use it in semi-supervised training scenarios and in situations where both spaces $\mathcal{X}$ and $\mathcal{Z}$ are complex and possibly structured, as for instance in semantic segmentation with images $x$ and segmentations $z$. 
 Learning an encoder--decoder pair in such a scenario would naturally allow solving inference problems in both directions between $x$ and $z$ as well as to build more complex models. The requirement to model $p(z)$ by a simple and tractable density becomes then a significant limitation.
 
In this work, we propose a symmetric learning approach inspired by game theory, which leads to a simple learning algorithm. The method can handle implicitly given marginal distributions $\pi(x)$ and $\pi(z)$. It does not require gradients of parametric discrete expectations like the gradient of ELBO \wrt the encoder parameters, and therefore no reparametrisation is needed. Consequently, handling discrete or continuous variables is simple. 
The method gives a novel view of the well-known wake-sleep algorithm~\citep{Hinton:Science1995}, as discussed in~\cref{sec:related}.
It can be applied to models with structured latent spaces, like hierarchical VAE, and extended to models consisting of 3 or more groups of variables. In the latter case, the model consists of several inference networks -- one for each group of variables. They are learned jointly and can address an extended range of tasks at inference time, as we demonstrate experimentally.

The rest of the paper is organised as follows. In the next two sections we derive and analyse our novel learning approach. In the following section we exemplify its application to advanced models and learning setups. In the final experimental section we compare it with ELBO learning, show that it provides comparable model estimates, and demonstrate its applicability to more complex models not addressable by ELBO.

\section{PROBLEM FORMULATION} \label{sec:problem}
We propose a generic learning approach, whose primary goal is to learn a decoder $p(x\cond z)$ and an encoder $q(z\cond x)$ in the following training scenarios:
\par\noindent{\em Semi-supervised learning:} We assume training samples $x\sim\pi(x)$ and $z\sim\pi(z)$ and possibly also joint samples $(x,z)\sim\pi(x,z)$, i.i.d.~drawn from an unknown distribution $\pi(x,z)$ and its marginals.
\par\noindent{{\em Unsupervised learning:} Only samples of $x\sim\pi(x)$ are observed. In this case the space $\mathcal{Z}$ is a free modelling choice.

Similar to VAE learning, the choice of the models for the decoder and encoder is dictated by the need to be able to evaluate (or at least differentiate) their respective log-densities and to sample from them. We will assume that the decoder and encoder belong to parametric exponential families of the form 
\begin{subequations}\label{eq:cond1}
\begin{align}
    &p_\theta(x \cond z) \propto \exp\bigl[ \scalp{\phi(x)}{f_\theta(z)}],\\
    &q_\varphi( z \cond x) \propto \exp\bigl[ \scalp{\psi(z)}{g_\varphi(x)}],
\end{align}
\end{subequations}
where $\phi \colon \mathcal{X} \to \R^n$ and $\psi\colon \mathcal{Z} \to \R^m$ are fixed sufficient statistics. The mappings $f$ and $g$ are usually modelled by deep networks, parametrised by $\theta$, $\varphi$.  Notice that variables $x$, $z$ can be either discrete or continuous depending on the chosen exponential family. Common choices are \eg Bernoulli or Gaussian models. 
\section{SYMMETRIC EQUILIBRIUM LEARNING}\label{sec:eq-learning}
We present our general approach and theoretical analysis for the semi-supervised learning task from the previous section, which naturally calls for a symmetric formulation.

For simplicity of exposition, let us assume that only marginal empirical distributions $\pi(x)$ and $\pi(z)$ are given, but no joint observations are available.
The goal is to learn an encoder--decoder pair $q_\varphi(z\cond x)$ and $p_\theta(x\cond z)$ by (i) optimising the likelihood of the observed data and (ii) enforcing the encoder and decoder consistency at the same time. 
We formulate the learning task symmetrically as finding a {\em Nash equilibrium} for a two-player game. The strategy of the first player is represented by the decoder $p_\theta$. Similarly, the strategy of the second player is represented by the encoder $q_\varphi$. The utility function of a  player is the likelihood of the training data \wrt its strategy. Thereby, training examples are completed by the strategy of the other player. For example, the missing information in the examples $x\sim\pi(x)$ for the decoder likelihood is completed by the encoder strategy: $z \sim q_\varphi(z\cond x)$.
Proceeding in the same way for the encoder, we obtain the utility functions
\begin{subequations} \label{eq:utility-n}
  \begin{align}
    L_p(\theta, \varphi) &= \Ex_{\pi(x)} \Ex_{q_\varphi (z\cond x)}[\log p_\theta(x\cond z)] ,\label{eq:utility-n-p}\\
    L_q(\theta, \varphi) &= \Ex_{\pi(z)} \Ex_{p_\theta  (x\cond z)}[\log q_\varphi(z\cond x)] .
  \end{align}
\end{subequations}
As we will see later, the game aims at maximising the decoder likelihood and the encoder likelihood of the training data simultaneously, whereby the mutual completion reinforces decoder-encoder consistency.

A Nash equilibrium of the game is a pair $(\theta_*,\varphi_*)$ such that 
\begin{align}\label{Equilibrium-updates}
& L_p(\theta_*, \varphi_*) \geqslant L_p(\theta, \varphi_*), 
\hspace{.2em}\forall \theta , \nonumber \\
& L_q(\theta_*, \varphi_*) \geqslant L_q(\theta_*, \varphi), 
\hspace{.2em}\forall \varphi ,
\end{align}
\ie a point at which neither player can improve its objective function. Towards finding an equilibrium we consider a simple gradient algorithm, in which each player tries to improve its utility \wrt to its strategy 
\begin{align}\label{eq:nash-alg}
 \theta := \theta + \alpha \nabla_{\theta} 
 L_p(\theta, \varphi); \ \ \
 \varphi := \varphi + \alpha \nabla_{\varphi}
 L_q(\theta, \varphi).
\end{align}
These updates may be executed in parallel or sequentially. Stochastic unbiased estimates of the required gradients are readily obtained by differentiating Monte-Carlo estimates of expectations~\eqref{eq:utility-n} with as few as a single sample. 
Unlike in ELBO, the expectation $L_p(\theta, \varphi)$ does not need to be differentiated with respect to the encoder parameters and similarly for $L_q(\theta, \varphi)$. 
There is no need for the reparametrization trick in case of continuous variables or specialised gradient estimators through discrete samples in case of discrete variables.
\paragraph{Uniqueness}
It is well known that nonzero-sum games can have multiple and even infinitely many Nash equilibria. It is therefore crucial to analyse uniqueness of the solution as well as the convergence properties of the algorithm \eqref{eq:nash-alg}.

Extending the decoder and encoder to joint models via
\begin{align}\label{pq-joints}
    p_\theta(x,z) = p_{\theta}(x\cond z)\pi(z);\ \ q_\varphi(x,z) = q_{\varphi}(z\cond x)\pi(x)
\end{align}
the game utilities~\eqref{eq:utility-n} can be compactly written as
\begin{align}
    &\E_{q_\varphi(x,z)} \log p_\theta(x,z);
    &\E_{p_\theta(x,z)} \log q_\varphi(x,z).
\end{align}
This game is hard to analyse because of non-linear mappings involved.

To allow for theoretical analysis we will enlarge the spaces of feasible joint distributions by considering the following canonical exponential families
\begin{subequations}\label{eq:expf-lifted}
\begin{align}
    & p_u(x, z) = \pi(z) \exp\bigl[\scalp{\phi(x,z)}{u} - A(u)\bigr], \label{eq:expf_p}\\
    & q_v(x, z) = \pi(x) \exp\bigl[\scalp{\psi(x,z)}{v} - B(v)\bigr],
\end{align}
\end{subequations}
where $\phi(x,z), \psi(x,z)$ are sufficient statistics on $(x,z)$, $u$ and $v$ are free parameter vectors and $A$ and $B$ are cumulant functions ensuring normalisation. The models ~\eqref{eq:expf-lifted} are log-linear in $u$ and $v$ by design. At the same time, with sufficiently complex $\phi(x,z)$ and $\psi(x,z)$ they can represent or approximate all models from the original families which were parametrised in terms of neural networks.

We explain this model relaxation for the case of binary valued vectors $z$ and $x$. The components of the vector of natural parameters $f_\theta(z)$ in \eqref{eq:cond1} and the corresponding cumulant function are then pseudo-Boolean functions and can be written as polynomials in the components of $z$. The same holds for the components of the sufficient statistic vector $\phi(x)$. This means that if we take the components of $\phi(x,z)$ in the relaxed class to contain all base monomials, then for any $\theta$ there would be a corresponding parameter vector $u$ making the models equal. 
Notice that only under this correspondence the exponent part in~\eqref{eq:expf_p} matches the conditional distribution $p_\theta(x|z)$ while this is not true for a generic $u$. 

\begin{restatable}{theorem}{PropUnique}\label{T1}
 The two-player game with utility functions 
 \begin{subequations}\begin{align}
  & L_p(u,v) = \Ex_{q_v(x, z)} \log p_u(x, z), \\
  & L_q(u,v) = \Ex_{p_u(x, z)} \log q_v(x, z)
 \end{align}\end{subequations}
and strategies given by exponential family distributions \eqref{eq:expf-lifted}  
has a unique, asymptotically stable equilibrium.
\end{restatable}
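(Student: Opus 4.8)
The plan is to use the log-linearity of the lifted models \eqref{eq:expf-lifted} to reduce the equilibrium condition to a moment-matching system, to show this system has a unique solution, and then to read off asymptotic stability from the Jacobian of the updates \eqref{eq:nash-alg}. For fixed $v$ one has $L_p(u,v)=\Ex_{q_v}[\log\pi(z)]+\scalp{\Ex_{q_v}[\phi]}{u}-A(u)$, which is concave in $u$ (affine minus the convex cumulant $A$) and strictly concave when $\mathcal P=\{p_u\}$ is minimal, since the Hessian $\nabla^2_{uu}L_p=-\nabla^2 A(u)=-\mathrm{Cov}_{p_u}(\phi)\prec 0$; symmetrically $L_q(u,v)$ is strictly concave in $v$ on $\mathcal Q=\{q_v\}$. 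Hence $(u_*,v_*)$ satisfies \eqref{Equilibrium-updates} iff it is a joint stationary point $\nabla_u L_p=\nabla_v L_q=0$, i.e. iff the moment-matching equations $\Ex_{q_{v_*}}[\phi]=\Ex_{p_{u_*}}[\phi]$ and $\Ex_{p_{u_*}}[\psi]=\Ex_{q_{v_*}}[\psi]$ hold. Equivalently, writing $L_p(u,v)=-H(q_v)-\kldiv{q_v}{p_u}$ and $L_q(u,v)=-H(p_u)-\kldiv{p_u}{q_v}$, an equilibrium is a pair for which $p_{u_*}$ is the moment ($M$-)projection of $q_{v_*}$ onto $\mathcal P$ and $q_{v_*}$ is the $M$-projection of $p_{u_*}$ onto $\mathcal Q$.

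For uniqueness I would lift both families into one exponential family. Assume (as the relaxation above is meant to ensure) that the combined statistic $T=(\phi,\psi)$ is minimal and rich enough that $\log\pi(z)$ and $\log\pi(x)$ lie in its affine span; then $\mathcal P$ and $\mathcal Q$ are both affine subfamilies of the minimal exponential family $\mathcal R$ with sufficient statistic $T$ --- $\mathcal P$ obtained by pinning the natural-parameter coordinates that $p_u$ does not carry (the ``$\psi$-block''), $\mathcal Q$ by pinning the complementary ``$\phi$-block'' --- so $\mathcal P\cap\mathcal Q$ consists of a single distribution $r_*$. At any equilibrium the moment-matching equations say exactly that $p_{u_*}$ and $q_{v_*}$ have identical $T$-means; being two members of the minimal family $\mathcal R$ that share a mean parameter, they coincide, so $p_{u_*}=q_{v_*}\in\mathcal P\cap\mathcal Q=\{r_*\}$, and by the natural-to-mean-parameter bijection the $(u_*,v_*)$ realising $r_*$ in each family are unique. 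Existence is immediate: since $r_*$ lies in both families we may write $r_*=p_{u_*}=q_{v_*}$, and the moment-matching system then holds trivially, so $(u_*,v_*)$ is an equilibrium.

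For asymptotic stability I would linearise \eqref{eq:nash-alg} at $(u_*,v_*)$. Using $\nabla_v\Ex_{q_v}[h]=\mathrm{Cov}_{q_v}(h,\psi)$ and its analogue for $p_u$, the Jacobian is the block matrix with blocks $-\mathrm{Cov}_{p_{u_*}}(\phi)$, $\mathrm{Cov}_{q_{v_*}}(\phi,\psi)$, $\mathrm{Cov}_{p_{u_*}}(\psi,\phi)$, $-\mathrm{Cov}_{q_{v_*}}(\psi)$; because $p_{u_*}=q_{v_*}=r_*$ all four are (cross-)covariances under the single distribution $r_*$, so the Jacobian is symmetric and equals $-C$, where $C$ is the covariance matrix under $r_*$ of the stacked statistic $(\phi(x,z),-\psi(x,z))$. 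Hence $C\succeq 0$, with $C\succ 0$ by minimality of $T$, so the Jacobian $-C$ is symmetric negative definite, hence Hurwitz; by Lyapunov's indirect method the unique equilibrium is asymptotically (indeed exponentially) stable --- for the continuous-time flow, and for the discrete updates \eqref{eq:nash-alg} at sufficiently small step size.

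I expect uniqueness to be the real obstacle. A non-zero-sum game, even a concave one, can possess a continuum of equilibria, and the off-the-shelf sufficient condition --- Rosen-type diagonal strict concavity of the pseudo-gradient $(\nabla_u L_p,\nabla_v L_q)$ --- fails here: the off-diagonal cross-covariance blocks of the global Jacobian are not dominated by its diagonal variance blocks, and in fact the pseudo-gradient is not a gradient field except on the ``diagonal'' where $p_u=q_v$. Uniqueness must therefore be extracted from the exponential-family structure, via the observation that a Nash equilibrium forces $p_{u_*}$ and $q_{v_*}$ to agree on the whole combined statistic and hence to be one and the same distribution, whose parameters are then pinned down because the two lifted subfamilies intersect in a single point. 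Getting the richness/minimality hypotheses on $(\phi,\psi)$ right (and the regularity ensuring the $M$-projections are attained in the interior and the cumulants are smooth) is where the care is needed; the stability conclusion then reduces to the single covariance identity above.
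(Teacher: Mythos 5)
Your first step (equilibria of the concave game are exactly the joint stationary points, i.e.\ the moment-matching system $\Ex_{p_{u_*}}[\phi]=\Ex_{q_{v_*}}[\phi]$, $\Ex_{p_{u_*}}[\psi]=\Ex_{q_{v_*}}[\psi]$) and your covariance/Jacobian computations are sound, and you are right that plain concavity plus a naive primal DSC check does not settle uniqueness. The genuine gap is in how you then get uniqueness: it rests on two extra hypotheses that are neither in the theorem nor supplied by the lifting \eqref{eq:expf-lifted} --- joint minimality of $T=(\phi,\psi)$ and, crucially, that $\log\pi(z)$ and $\log\pi(x)$ lie in the affine span of $T$. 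The latter generally fails outright: $\pi(x)$ and $\pi(z)$ are empirical data distributions, typically without full support, so their logarithms are not finite linear combinations of any fixed statistics; then $p_u$ (support $\mathcal{X}\times\mathrm{supp}\,\pi(z)$) and $q_v$ (support $\mathrm{supp}\,\pi(x)\times\mathcal{Z}$) cannot both sit inside one minimal family $\mathcal{R}$, equal $\phi$- and $\psi$-means no longer force $p_{u_*}=q_{v_*}$, and indeed at a typical equilibrium $p_{u_*}\neq q_{v_*}$ --- the paper is explicit that consistency is only \emph{encouraged}, not guaranteed, so an argument whose conclusion is exact consistency at every equilibrium is proving a different (and too strong) statement. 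The former hypothesis is in tension with the intended regime as well: when $\phi$ and $\psi$ are made rich (e.g.\ all monomials in the binary case, as the paper suggests), $(\phi,\psi)$ is far from jointly minimal and the two lifted families no longer intersect in a single point, so the ``pinned complementary blocks'' picture collapses. Your stability step inherits the same restriction, since the symmetry of the Jacobian (and hence its identification with $-\mathrm{Cov}_{r_*}(\phi,-\psi)$) uses $p_{u_*}=q_{v_*}$; without it the linearisation is not symmetric and not obviously Hurwitz, and in any case you only obtain local stability.

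The paper's proof takes precisely the Rosen route you set aside, but after a dualisation that removes the obstacle you identified: each player's best response is Fenchel-dualised into minimising a relative entropy, e.g.\ $F_p(p)=\sum_{x,z}p(x,z)\bigl[\log p(x,z)-\log\pi(z)\bigr]$, over joint distributions subject to the linear moment-matching constraints. In this dual game each utility depends only on the player's own strategy, the coupling being relegated to linear constraints, so the symmetrised Jacobian of the dual pseudo-gradient is diagonal with entries $1/p(x,z)$ and $1/q(x,z)$, hence positive definite, and Rosen's Theorems 3, 4 and 9 give uniqueness and (global) asymptotic stability of the gradient dynamics \eqref{eq:nash-alg} without ever requiring $p_{u_*}=q_{v_*}$ or any representability of $\log\pi(x)$, $\log\pi(z)$. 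To repair your primal argument you would have to handle equilibria with $p_{u_*}\neq q_{v_*}$, i.e.\ prove uniqueness of the coupled fixed point of the two $M$-projections without the common-family embedding --- which is essentially what the dual DSC argument accomplishes.
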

The proof is given in Appendix~\ref{app:unique}. The idea is to construct a dual formulation of the game, which maximises the entropy under moment matching constraints. In this reformulation, it is then easy to prove the diagonal strict concavity condition~\citep{Rosen:Econ1965} -- a sufficient condition for uniqueness. Following theorems 7-10 in \citep{Rosen:Econ1965}, the theorem implies that the simple gradient ascent algorithm \eqref{eq:nash-alg} converges to the unique equilibrium point. 

The theorem applies to log-linear models~\eqref{eq:expf-lifted} with free natural parameters $u$ and $v$ and guarantees that the proposed algorithm converges to a unique equilibrium in this case. This has direct applicability to \eg EF-Harmonium models, which are however outside of our scope. Its value for VAEs defined in terms of neural networks is rather indirect: if the algorithm works in the lifted space, it gives more confidence that it would also make sense in a subspace with a non-linear parametrisation.

\paragraph{Consistency}
Finally, we discuss the question of encoder--decoder consistency. We say that models $p(x\cond z)$ and $q(z\cond x)$ are {\em consistent} if there exists a joint distribution $m(x,z)$ of which they are conditional distributions (see also~\citealt{liu2021on}).
Since we model $p_\theta(x \cond z)$ and $q_\varphi(z\cond x)$ independently, they are in general inconsistent. Enforcing the consistency strictly, while keeping the models in exponential families~\eqref{eq:cond1}, leads to a joint $m(x,z)$ necessarily collapsing to an EF-Harmonium~(\citealt{Arnold-1991}, \citealt{Shekhovtsov:ICLR2022}), which is a severe limitation. 
However, encouraging consistency could serve as a useful regularisation and can improve learning efficiency.

We observe that our game formulation implicitly encourages consistency.
\begin{restatable}{proposition}{PropConsistency}\label{P:consistency}
With the definition of joint distributions $p_\theta(x,z)$ and $q_\varphi(x,z)$ in~\eqref{pq-joints} and their respective marginals, the game \eqref{eq:utility-n} is equivalent to the game with utilities:
\begin{align*}
& \textstyle L'_p = \Ex_{\pi(x)}\big[ \log p_\theta(x) - \kldiv{q_\varphi(z\cond x)}{p_\theta(z\cond x)} \big],\label{equiv-ELBO} \\
& \textstyle L'_q = \Ex_{\pi(z)} \big[ \log q_\varphi(z) - \kldiv{p_\theta(x\cond z)}{q_\varphi(x\cond z)}\big] .
\end{align*}
\end{restatable}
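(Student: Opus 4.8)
The plan is to read the asserted equivalence in the natural game-theoretic sense: for every fixed $\varphi$ the maps $\theta\mapsto L_p(\theta,\varphi)$ and $\theta\mapsto L'_p(\theta,\varphi)$ differ by a term not depending on $\theta$, and symmetrically, for every fixed $\theta$ the maps $\varphi\mapsto L_q(\theta,\varphi)$ and $\varphi\mapsto L'_q(\theta,\varphi)$ differ by a term not depending on $\varphi$. Once this is established, the two games have identical best-response correspondences, hence the same set of Nash equilibria \eqref{Equilibrium-updates}; moreover $\nabla_\theta L_p=\nabla_\theta L'_p$ and $\nabla_\varphi L_q=\nabla_\varphi L'_q$, so they also generate the very same dynamics \eqref{eq:nash-alg}. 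So the whole argument reduces to proving the "differs by an opponent-only term" statement, and it suffices to do it for $L_p$, the statement for $L_q$ being its mirror image under the exchange $x\leftrightarrow z$, $p\leftrightarrow q$, $\theta\leftrightarrow\varphi$.

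For $L_p$, first introduce the marginal and the posterior induced by the decoder's joint model \eqref{pq-joints}, i.e.\ $p_\theta(x)=\sum_z p_\theta(x\cond z)\pi(z)$ and, by Bayes' rule inside $p_\theta(x,z)$, $p_\theta(z\cond x)=p_\theta(x\cond z)\pi(z)/p_\theta(x)$; thus $p_\theta(x,z)$ factorises in both directions. This gives the pointwise identity $\log p_\theta(x\cond z)=\log p_\theta(x)+\log p_\theta(z\cond x)-\log\pi(z)$. Substituting it into the definition \eqref{eq:utility-n-p} and then adding and subtracting $\log q_\varphi(z\cond x)$ inside the inner expectation yields
\begin{align*}
L_p(\theta,\varphi)
&= \Ex_{\pi(x)}\Bigl[\log p_\theta(x) - \kldiv{q_\varphi(z\cond x)}{p_\theta(z\cond x)}\Bigr] \\
&\quad + \Ex_{\pi(x)}\Ex_{q_\varphi(z\cond x)}\bigl[\log q_\varphi(z\cond x)-\log\pi(z)\bigr].
\end{align*}
The first line is precisely $L'_p$, and the second line is a function of $\varphi$ and the fixed marginals $\pi$ only (it is a difference of a conditional entropy and a cross-entropy-type term), hence independent of $\theta$ — which is exactly what the reduction above requires.

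Applying the identical manipulation to $L_q$, now with $q_\varphi(z)=\sum_x q_\varphi(z\cond x)\pi(x)$ and $q_\varphi(x\cond z)=q_\varphi(z\cond x)\pi(x)/q_\varphi(z)$, gives $L_q=L'_q+\Ex_{\pi(z)}\Ex_{p_\theta(x\cond z)}[\log p_\theta(x\cond z)-\log\pi(x)]$, whose remainder is $\varphi$-independent, and the proof is complete. I do not anticipate a genuine obstacle here — the statement is a regrouping of terms — so the only points that need care are (a) pinning down the meaning of "equivalent game" as above, so that the equivalence really transfers to the equilibria and to the update \eqref{eq:nash-alg}, and (b) well-definedness of the auxiliary objects $p_\theta(x)$, $p_\theta(z\cond x)$, $q_\varphi(z)$, $q_\varphi(x\cond z)$ and of $\log\pi(x),\log\pi(z)$, which holds as soon as the distributions involved have full support (in particular in the discrete case, where the sums are finite).
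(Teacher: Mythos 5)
Your proposal is correct and follows essentially the same route as the paper: you establish the identity $L_p = L'_p + \Ex_{\pi(x)}\bigl[\kldiv{q_\varphi(z\cond x)}{\pi(z)}\bigr]$ (the paper derives the same relation by expanding $L'_p$ instead of $L_p$), note that the remainder is $\theta$-free (and symmetrically $\varphi$-free for $L_q$), and conclude that best responses, equilibria, and the gradient dynamics coincide. Your write-up is merely a bit more explicit than the paper about what ``equivalent game'' means; no substantive difference.
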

See details in Appendix~\ref{app:unique}. 
The utility $L'_p$ is an alternative decomposition of ELBO into the data likelihood part and the encoder--posterior divergence, encouraging consistency. The utility $L'_q$ is a symmetric counterpart. The difference to ELBO learning is that $L'_p$ is optimised over $\theta$ only and not over $\varphi$ and vice-versa for $L'_q$.

Similar to ELBO learning, there is no guarantee that the proposed learning approach will result in a consistent decoder--encoder pair defining a unique joint distribution. The necessity for such a joint distribution might be however dictated by the application for which the VAE is learned. Or it might arise if the learned VAE is only a part of a larger model, which requires such a joint distribution. In such cases we may consider the distribution (\eg \citealt{liu2021on})
\begin{equation}\label{eq:implicit}
   \textstyle m(x,z) = \frac{1}{2} m(z) p(x\cond z) + \frac{1}{2} m(x) q(z\cond x)
\end{equation}
with implicitly defined marginals $m(x)$ and $m(z)$. They must satisfy $m(x) = \sum_z m(x,z)$ and $m(z) = \sum_x m(x,z)$, which leads to the equations 
\begin{subequations}\label{eq:m-margs}
\begin{align}
 m(x) & \textstyle = \sum_z p(x\cond z) m(z), \label{m(x)}\\
 m(z) & \textstyle = \sum_x q(z \cond x) m(x) .
\end{align}
\end{subequations}
While it is usually not possible to compute these marginals in closed form, it is nevertheless possible to sample from them and from the joint $m(x,z)$ as the limiting distributions of a Markov chain that alternates sampling of $x\sim p(x|z)$ and $z\sim q(z|x)$, as considered by \cite{NIPS2017_30f8f6b9}.
\section{ADVANCED MODELS AND LEARNING SETUPS}
In this section we exemplify the application of the proposed learning approach to several practically relevant learning setups and more complex models.

\paragraph{Semi-Supervised Learning with Mixed Data}
We extend the model and learning setup from ~\cref{sec:eq-learning} in two respects. First, we assume that in addition to empirical distributions $\pi(x)$ and $\pi(z)$ we also have complete training examples, \ie, matching pairs $(x,z)$, forming an empirical distribution $\pi(x,z)$. 
Note that here $\pi$-s are empirical distributions, hence \eg $\pi(x)$ need not be a marginal of $\pi(x,z)$. Second, we assume that the decoder's joint distribution is defined using its own parametrised prior for $z$, \ie $p_\theta(x,z)=p_\theta(z)p_\theta(x\cond z)$.

The utility function of the decoder sums the $p$-likelihoods of the training set, of which the likelihoods of examples $(x,z)\sim \pi(x,z)$ and $z\sim\pi(z)$, are tractable. 
The missing information in examples $x\sim\pi(x)$ with intractable $p$-likelihood is completed by the encoder strategy $q_\varphi(z\cond x)$.  Proceeding in the same way for the encoder, we get the utility functions
\begin{subequations} \label{eq:utility}
 \begin{align}
   L_p(\theta, \varphi) &= \Ex_{\pi(x,z)} [\log p_\theta(x,z)] + \Ex_{\pi(z)}[\log p_\theta(z)] + \nonumber \\
   & + \Ex_{\pi(x)} \Ex_{q_\varphi (z\cond x)}[\log p_\theta(x,z)] ,\\
   L_q(\theta, \varphi) &= \Ex_{\pi(x,z)} [ \log q_\varphi(z\cond x) ] + \nonumber \\
   & + \Ex_{\pi(z)} \Ex_{p_\theta(x\cond z)}[\log q_\varphi(z\cond x)] .
 \end{align}
\end{subequations}
 Although we follow the symmetric approach as before, the utilities \eqref{eq:utility} are not entirely symmetric due to the model asymmetry: $p_\theta(x, z)$ has its own parametrised prior $p_\theta(z)$, whereas $q_\varphi(z \cond x)$ lacks a prior model for $x$.

 \paragraph{Unsupervised Learning} By unsupervised learning we will understand the case when only $x\sim \pi(x)$ is observed.  The choice and interpretation of the $\mathcal{Z}$ space and the respective distribution is then completely free. We are interested in learning a decoder model $p_\theta(x,z) = p_\theta(x\cond z) p_\theta(z)$ and an encoder $q_\varphi(z \cond x)$ approximating $p_\theta(z \cond x)$.
 
 The utility function for the decoder is given by its likelihood for the examples $x\sim \pi(x)$, completed by the encoder. To form a likelihood for the encoder, we consider examples generated by the decoder model. The resulting utility functions are
 \begin{align} \label{eq:util_unsup}
  & L_p(\theta, \varphi) = \Ex_{\pi(x)} \Ex_{q_\varphi (z\cond x)}[\log p_\theta(x,z)], \nonumber \\
  & L_q(\theta, \varphi) = \Ex_{p_\theta(x,z)} [\log q_\varphi(z\cond x)].
 \end{align}
 In comparison with ELBO approach, the required stochastic gradients of the log-likelihoods are easy to compute, as discussed in~\cref{sec:eq-learning}.
 Notice that the algorithm applies also in case when $p(z)$ is fixed and implicit, \ie accessible by sampling only.

\paragraph{Hierarchical VAEs} Finally, we show that our unsupervised learning approach generalises to hierarchical / autoregressive VAEs. We assume that the hidden state $z$ consists of parts $z_0, z_1, \dots, z_m$, and $x\sim\pi(x)$ can be observed. Such models come in two variants. In the first one the factorisation order of the encoder is reverse to the factorisation order of the decoder. Examples are \eg Helmholtz machines \citep{Hinton:Science1995} and deep belief networks \citep{Hinton:NeurComp2006}. 
Here, we will consider the second variant, in which the encoder and decoder have the same order of factorisation:
\begin{subequations}
\begin{align}\label{HVAE-factorisation}
 & p(x,z) = p(z_0) \prod_{i=1}^m p(z_i \cond z_{<i}) \: p(x \cond z), \\
 & q(z \cond x) = q(z_0 \cond x) \prod_{i=1}^m q(z_i \cond z_{<i}, x) .
\end{align}
\end{subequations}
The encoder of such models can share parameters with the decoder, in particular \citet{Soenderby:NIPS2016} proposed to define the encoder by
\begin{equation}
 q_{\theta,\varphi}(z_i \cond z_{<i}, x) \propto p_\theta(z_i \cond z_{<i}) f_i(z_i; d_i(x, \varphi) ),
\end{equation}
where $f_i$ is a factorised function of $z_i$ and $d_i(x, \varphi)$ are the hidden layer outputs of a deterministic encoder network $x \mapsto d_{m} \mapsto d_{m-1}\dots \mapsto d_{0}$, parameterised by $\varphi$.
The strategy of the first player is represented by the decoder parameters $\theta$, while the strategy of the second player is represented by the encoder parameters $\varphi$. 
The utility functions for unsupervised learning are as in \eqref{eq:util_unsup}. Thanks to the factorisation of the decoder and encoder, they decompose into sums over the blocks $p(z_i \cond z_{<i})$ and $q(z_i \cond z_{<i}, x)$ and are tractable.

The model can be also learned ``partially'' semi-supervised by assuming that besides training examples $x\sim \pi(x)$ we have access to a (usually smaller) set of training examples $(x,z_0)\sim \pi(x,z_0)$. This is relevant, for example, when $z_0$ represents some hidden state(s) like classes or segmentations, on which we want to condition the decoder $p(x,z)$. The additional training examples will add
\begin{subequations}\label{semi-sup-labelled}
\begin{align}
  & \Ex_{\pi(x,z_0)}\Ex_{q(z_{>0} \cond z_0, x)} [\log p(x,z)], \\
  & \Ex_{\pi(x,z_0)}[\log q(z_0 \cond x)]
 \end{align}
\end{subequations} 
 to the respective utility functions.
\section{RELATED WORK}\label{sec:related}
\paragraph{Wake-Sleep}
The learning algorithm~\eqref{eq:nash-alg} with utility functions~\eqref{eq:util_unsup} in the unsupervised case turns out to be equivalent to the wake-sleep (WS) algorithm first proposed by \citet{Hinton:Science1995}. 
However, we arrived at it from a conceptually new game-theoretic formulation, allowing for new analysis and generalisation to other settings (semi-supervised, partial observation scenarios). In~\cref{A:WS} we give a brief overview of the original WS and follow-up works.

\paragraph{Implicit Prior}
An important advantage of the proposed method is allowing prior $\pi(z)$ to be implicit, \ie accessible via samples only. Several works have extended VAEs to handle implicit encoders and priors. \citet{Mescheder:ICML2017} and \citet{Huszar:Arxiv2017} estimate the log-density ratio $\log \frac{q_\varphi(z\cond x)}{\pi(z)}$ in ELBO by learning a logistic regression discriminator. Similar to GANs, this requires an inner loop with possibly complex discriminator. 
\citet{Molchanov-19} allow both the encoder and the prior to be an intractable mixture of tractable densities. At the training time, a finite sample from the mixture is used to form a density estimate of $\pi(z)$ and a lower bound on ELBO. These approaches are substantially more complex than ours and have further limitations. The prior can be made completely implicit, by assuming that the encoder-decoder model is consistent and hence defines a joint distribution and its marginals symmetrically. Towards this end \citet{liu2021on} explicitly optimise consistency and an expression that matches likelihood when assuming consistency.

\paragraph{Symmetric Learning}
Asymmetry of ELBO formulation has motivated several approaches, alternative to ours. \citet{Dumoulin-17} minimises Jensen-Shannon divergence between joint encoder $q(x,z) = \pi(x)q(x|z)$ and decoder $p(x,z)$. To estimate this divergence, a discriminator of joint samples is learned alongside, as in GANs. \citet{Pu-17} use a similar approach to minimise the symmetrised KL divergence. \citet{NIPS2017_30f8f6b9} learns the MCMC encoder--decoder sampler by using a discriminator between data-clamped and free-running chains. An important difference to our work is that the game in these approaches is between the discriminator and the model, not between decoder and encoder. 

\paragraph{Unsupervised and Semi-Supervised VAEs}
Unsupervised equilibrium learning with utilities~\eqref{eq:util_unsup} can be reinterpreted to facilitate theoretical comparison with ELBO alongside~\cref{P:consistency}.
Furthermore, hierarchical model with observed $z_0$~\eqref{semi-sup-labelled} is closely related to semi-supervised learning with ELBO~\citep{Kingma-14}.
These connections are detailed in~\cref{A:kingma}.

\section{EXPERIMENTS}\label{sec:exp}
\paragraph{Hierarchical VAE (MNIST)}
\begin{figure*}[t]
    \centering
    \begin{tabular}{ccc}
        & Random Latent Codes & Limiting Distribution \\[5pt]
        \begin{turn}{90} 
        \begin{minipage}{2.4cm} 
        \begin{center}
        ELBO
        \end{center}
        \end{minipage}
        \end{turn} & 
        \includegraphics[width=0.45\textwidth]{./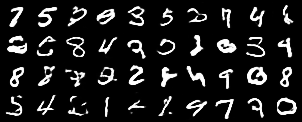} &
        \includegraphics[width=0.45\textwidth]{./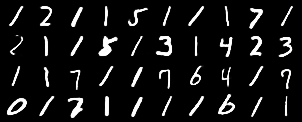} \\
        & $ \text{FID} = 5.17$ & $\text{FID} = 83.30$ \\[5pt]
        \begin{turn}{90} 
        \begin{minipage}{2.4cm}
        \begin{center}
        Symmetric
        \end{center}
        \end{minipage}
        \end{turn} & 
        \includegraphics[width=0.45\textwidth]{./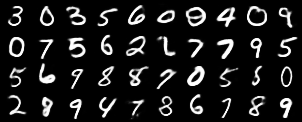} &
        \includegraphics[width=0.45\textwidth]{./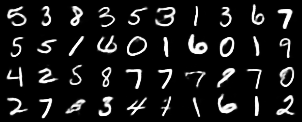} \\
        & $\text{FID} = 1.73$ & $\text{FID} = 3.63$ \\
    \end{tabular}
    \caption{\label{fig:mnist} Ladder VAE (MNIST): FID scores and images generated from random latent codes and from  limiting distributions of models learned by maximising ELBO and by symmetric equilibrium learning (images are shown by probabilities for better visibility).}
\end{figure*}
\begin{figure*}[t]
    \centering
    \includegraphics[width=0.98\textwidth]{./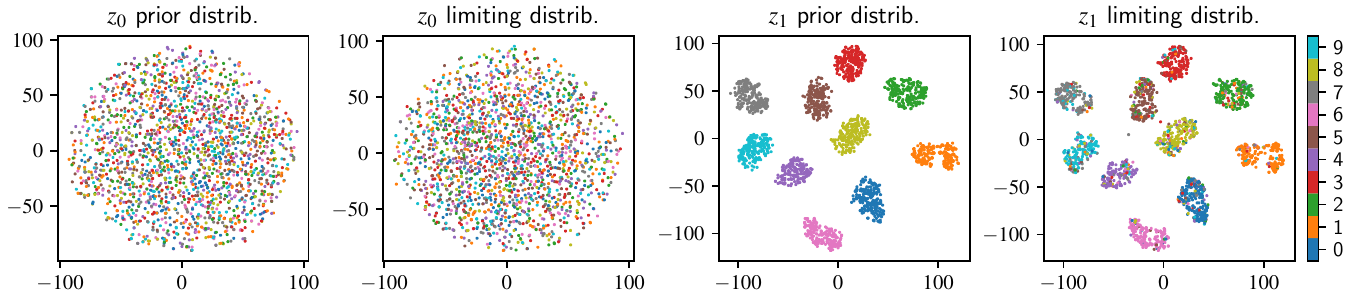}
    \caption{\label{fig:mnist-cls-tsne} MNIST: tSNE embeddings for the VAE with class labels. Points are coloured by digit classes. See text for explanation.}
   \end{figure*}
   
The goal of this experiment is to compare the symmetric equilibrium learning and ELBO learning on a simple dataset -- MNIST images binarised by a suitably chosen threshold. We consider  two hierarchical VAE model variants, each with two groups of binary valued latent variables $z_0\in\mathcal{B}^{30}$ and $z_1\in\mathcal{B}^{100}$. The decoder model is $p(x,z_0,z_1)=p(z_0) p(z_1 \cond z_0) p(x \cond z_1)$, where we assume a uniform distribution $p(z_0)$. The encoder for the first model variant (similar to ladder VAEs) factorises in the same order as the decoder, i.e.~$q(z_0, z_1 \cond x) = q(z_0\cond x) q(z_1\cond z_0, x)$ and shares parameters with the decoder as described in Sec.~\ref{sec:eq-learning}. The encoder in the second model variant factorises in reverse order, i.e.~$q(z_0, z_1 \cond x) = q(z_1\cond x) q(z_0\cond z_1)$ and shares no parameters with the decoder. The networks used in the encoders and decoders are standard deep convolutional networks of decreasing and increasing spatial resolution respectively. More details are provided in Appendix~\ref{app:mnist-impl}\footnote{The code is available under \\ \href{https://github.com/dschles70/symvae-aistats2024}{https://github.com/dschles70/symvae-aistats2024}}. Training such models with ELBO requires a specialised gradient estimator for differentiating expectations in $q$ w.r.t.~its parameters. We use the estimator by~\citet{Gregor-14}, which is superior to straight-through and comparable to complex unbiased estimators for VAEs~\citep{MuProp}. Notice again, that no such approximation is required for the symmetric equilibrium learning.

Besides validating the generative capabilities of two resulting hierarchical VAEs, we want to analyse the consistency of their decoder--encoder pairs. We therefore generate images (i) from the decoder model $p$ and (ii) from the limiting distribution $m(x)$ (see Sec.~\ref{sec:eq-learning} for explanation).
Fig.~\ref{fig:mnist} and Table~\ref{tab:mnist} indicate that the models obtained by symmetric learning achieves better consistency having at the same time slightly better FID scores. This is confirmed by tSNE embeddings of $z$ samples from the two models (see Appendix~\ref{app:mnist-impl}).

\begin{table}[h]
\caption{MNIST FID scores} \label{tab:mnist}
\begin{center}
\begin{tabular}{lll}
\textbf{model / alg.}  &\textbf{rand.~latent} &\textbf{limiting}\\
\hline \\
LVAE, ELBO & 5.17 & 83.30 \\
LVAE, symmetric & 1.73 & 3.63 \\
RVAE, ELBO & 5.83 & 29.59 \\
RVAE, symmetric & 0.81 & 5.40 \\
\end{tabular}
\end{center}
\end{table}

To further strengthen this finding, we conducted similar experiments for the Fashion-MNIST dataset. Results and details are given in Appendix~\ref{app:fmnist}.

The next experiment aims to show that the internal representations of a hierarchical VAE can be learned to have good generative and discriminative capabilities at the same time, even without ``supervised'' terms in the encoder objective as in \eqref{semi-sup-labelled}. For this we extend $z_0$ by ten additional binary variables, which encode the class labels (one hot encoding). This means that $z_0=(l,c)$ combines latent variables $l$ with class labels $c$. We learn the model by symmetric learning from labelled examples $(x,c)$, but use the following utility functions
\begin{align}\label{exp:semi-MNIST}
 & L_p(\theta, \varphi) = \Ex_{\pi(x,c)}\Ex_{q_\varphi(l \cond x)}\Ex_{q_\varphi(z_{>0} \cond x, z_0)} [\log p_\theta(x,z)] , \nonumber \\
 & L_q(\theta, \varphi) = \Ex_{p_\theta(x,z)} [\log q_\varphi(z \cond x)] .
\end{align}
This means that the class information is used only when learning the decoder (notice that $q_\varphi(c,l\cond x)$ factorises w.r.t.~to $c$ and $l$). The encoder is learned solely on examples generated from the decoder, \ie without any discriminative terms. The learned encoder achieves 99\% classification accuracy on the MNIST validation set, with almost no decrease of the FID scores for the generated images ($2.9$ when sampled from the decoder and $4.0$ when sampled from the limiting distribution). We also analyse tSNE embeddings of samples of the latent part $l$ of $z_0=(l,c)$ and samples of $z_1$, both from the prior distribution $p(z)$ and from the limiting distribution $m(z\cond c)$. Fig.~\ref{fig:mnist-cls-tsne} reveals that the latent part of $z_0$ is fully class agnostic, whereas $z_1$ is clearly clustered w.r.t.~the digit classes. This can be interpreted as follows. The latent part $l$ of $z_0=(l,c)$ is ``transversal'' to the class labels $c$ and presumably encodes image properties like stroke width, slant etc., whereas the internal representations in $z_1$ are clustered by digit classes and encode the appearance properties separately for each class.

\paragraph{Semantic Segmentation (CelebA)}
The following experiments illustrate the flexibility of the proposed approach on an application which is not accessible by ELBO learning. We consider the task of semantic segmentation with the goal to build a generative image segmentation model which can (i) generate image and segmentation pairs, (ii) segment given images, and (iii) generate images given a segmentation. 

We use the CelebA-HQ dataset \citep{karras2018progressive} and downscale its images and segmentations to $64\times 64$ pixels for simplicity.
\begin{figure*}[t]
    \centering
    \includegraphics[width=0.75\textwidth]{./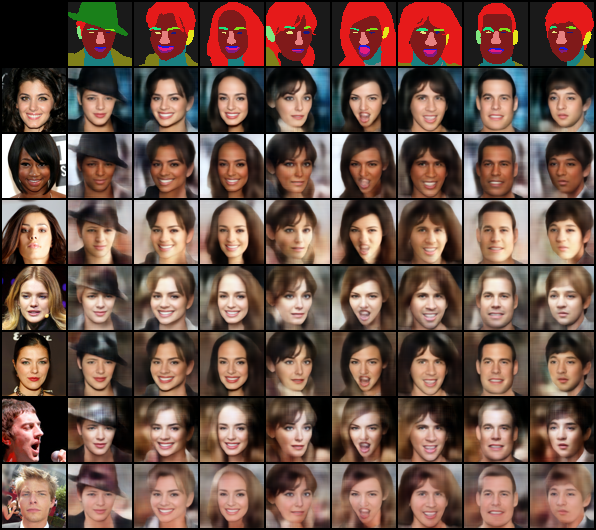}
    \caption{\label{fig:x_cond_s_panel} Given images and segmentations $(x_i,s_i)$ from the training set ($x_i$ are shown in the leftmost column), latent codes $z_{2i}$ are sampled from $q_{\varphi_2}(z_2 \cond x_i,s_i)$. Given segmentations $s_j$ shown in the top row, images $x_{i,j}$ are sampled from $p_{\theta_2}(x \cond s_j,z_{2i})$. Images are shown by mean values of the respective Gaussians for better visibility.}
\end{figure*}
Let $x\in\mathbb R^{3\times 64\times 64}$ be an image and $s\in \{1,\ldots,K\}^{64\times 64}$ be a segmentation (a categorical variable for each pixel). In order to model a distribution $p(x,s)$, we might try to learn a VAE with a decoder $p_\theta(x,s\cond z)$ and encoder $q_\varphi(z \cond x,s)$, assuming \eg a uniform prior distribution for the  vector of binary latent variables $z\in\mathcal{B}^m$.  However, this alone will not meet our goals because we can not access the resulting distributions $p(s \cond x)$ and $p(x \cond s)$. We propose to model $p_\theta(x,s\cond  z)$ as limiting distribution of a pair of parametrised conditional probability distributions $p_{\theta_1}(s\cond x,z)$ and  $p_{\theta_2}(x\cond s,z)$ (see \eqref{eq:implicit}). This means that the marginal probability distributions $p_\theta(x\cond z)$ and $p_\theta(s\cond z)$ are defined implicitly through the corresponding marginalisation constraints. 

To summarise, the whole model consists of three learnable conditional probability distributions $p_{\theta_1}(s\cond x,z)$, $p_{\theta_2}(x \cond s,z)$ and $q_\varphi(z \cond x,s)$. This defines a nested game with three players. Their respective strategies are represented by $\theta_1$, $\theta_2$ and $\varphi$. Their utility functions are
\begin{subequations} \label{eq:triple_game}
\begin{align}
 L_{\theta_1}(\theta_1, \theta_2, \varphi) & = 
 \Ex_{\pi(x,s)} \Ex_{q_\varphi(z\cond x,s)}
 \Bigl[ \log p_{\theta_1}(s \cond x, z) + \nonumber \\
 & \Ex_{p_{\theta_2}(x' \cond s, z)} \log p_{\theta_1}(s \cond x', z) \Bigr],
\end{align}
\begin{align}
 L_{\theta_2}(\theta_1, \theta_2, \varphi) & = 
 \Ex_{\pi(x,s)} \Ex_{q_\varphi(z\cond x,s)} 
 \Bigl[ \log p_{\theta_2}(x \cond s, z) + \nonumber \\
 & \Ex_{p_{\theta_1}(s' \cond x, z)} \log p_{\theta_2}(x \cond s', z) \Bigr],
\end{align}
\begin{align}
 L_{\varphi}(\theta_1, \theta_2, \varphi) & = 
 \Ex_{\pi(z)} \Ex_{p_\theta(x,s \cond z)} \Bigl[\log q_\varphi(z \cond x, s)\Bigr],
\end{align}
\end{subequations}
where Gibbs sampling is applied for obtaining pairs $(x,s)\sim p_\theta(x,s \cond z)$ in the last utility function. (See Appendix~\ref{app:segm-limiting} for detailed explanation).

To ease the training, we start by pre-training model parts for $p(s)$ and $p(x \cond s)$ separately. For the former we introduce latent variables $z_1\in \mathcal{B}^{50}$, which should encode segmentation shapes, and define $p(s)=\sum_{z_1} p(z_1)\cdot p_{\theta_1}(s \cond z_1)$ with uniform prior $p(z_1)$. The model for $p(x \cond s)$ is a latent variable model $p(x \cond s)=\sum_{z_2} p(z_2)\cdot p_{\theta_2}(x\cond s,z_2)$ with latent variables $z_2\in \mathcal{B}^{100}$, also uniformly distributed a-priori, which should encode appearance properties, like \eg segment colours, textures, characteristic shadows etc. Both $p_{\theta_1}(s \cond z_1)$ and $p_{\theta_2}(x\cond s,z_2)$ are equipped with corresponding encoders, \ie $q_{\varphi_1}(z_1\cond s)$ and $q_{\varphi_2}(z_2\cond x,s)$, and trained by symmetric learning, which is straightforward. All conditional probability distributions $p$ and $q$ are implemented as moderate complexity feed-forward networks, which output the parameters of the corresponding probability distribution. For example, $p_{\theta_2}(x\cond s,z_2)=\mathcal{N} (\mu_{\theta_2}(s,z_2), \sigma_{\theta_2}(s,z_2))$ is a diagonal normal distribution with means $\mu$ and standard deviations $\sigma$ provided by the corresponding network. 

Results for the learned $p_{\theta_2}(x\cond s)$ are illustrated in Fig.~\ref{fig:x_cond_s_panel} in the following way. We consider pairs of training examples, each consisting of an image and its segmentation. The first example is encoded by $q_{\varphi_2}(z_2\cond x,s)$ and the sampled latent code $z_2$ is used to decode the segmentation of the second example to an image by using $p_{\theta_2}(x\cond s,z_2)$.

After pre-training we extend the model part $p_{\theta_1}(s\cond z_1)$, learned in the previous step, to represent $p_{\theta_1}(s \cond x,z)$ by adding an ``additional branch'', \ie we define
\begin{align}\label{eq:new_codn_s}
    p(s\cond x,z) \propto \exp \bigl\langle f_1(z_1) + f_2(x,z_2), s_{oh}\bigr\rangle ,
\end{align}
where $f_1$ is the pre-trained network, $s_{oh}$ denotes the segmentation in one-hot encoding and $f_2$ is the additional network, which makes $s$ dependent on $x$ and $z_2$ as well. Its initial weights are chosen so that it outputs zeros at the beginning.

Finally, the model \eqref{eq:triple_game} is initialised by the pre-trained components and trained towards a Nash equilibrium for the three player game as explained above. 
Fig.~\ref{fig:res_full} shows a few results. The model achieves 95.2\% segmentation accuracy on the training set and 90.7\% segmentation accuracy on the validation set.

\begin{figure*}[t]
    \centering
    \includegraphics[width=0.75\textwidth]{./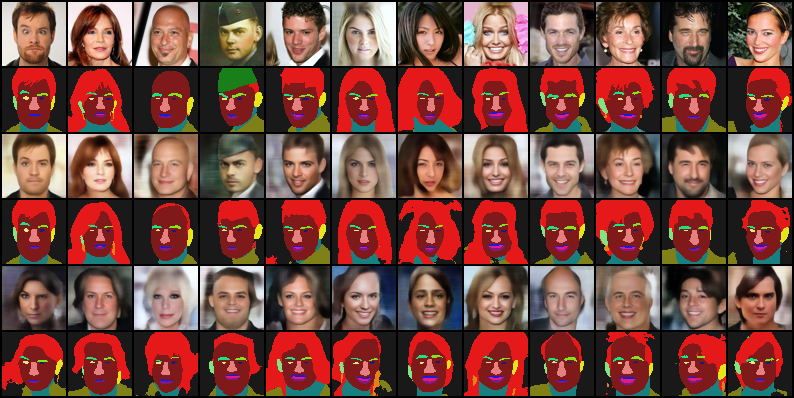}
    \caption{\label{fig:res_full} First two rows: training data $(x, s)$. Third and fourth rows: reconstructed images, and segmentations sampled from $p_\theta(x \cond s, z_2)$ and from $p_\theta(s\cond x, z)$ with $z\sim q_\varphi(z\cond x,s)$. Last two rows: sampling image--segmentation pairs from the full limiting distribution.}
\vskip0.5\baselineskip
    \centering
    \includegraphics[width=0.75\textwidth]{./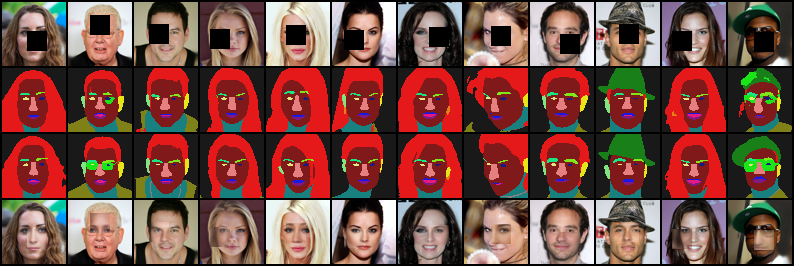}
    \caption{\label{fig:segminc} Segmentation from incomplete data. First row: original images from the validation set with hidden parts depicted as black squares. Second row: predicted segmentations, Third row: ground truth segmentations. Fourth row: ``in-painting'' -- average over all images obtained during generation (with clamped visible part).}
\end{figure*}

An important property of the obtained model is its ability to complete missing information for any subset of its variables. Given a partial observation -- \eg an image part, or a segmentation part, or a combination of such parts -- we can complete the missing data by sampling from the corresponding limiting distribution. We illustrate this property on the example of inference from incomplete images $x$. Let $x = (x_o, x_h)$ consist of two parts: an observed part $x_o$ and a hidden part $x_h$. In order to segment such an image by the maximum marginal decision strategy, we need to compute the marginal probabilities $p(s_i \cond x_o)$ for each pixel $i$. They can be estimated by Gibbs sampling, which alternately draws all unobserved random variables, including $x_h$. We accumulate segmentation label frequencies for each pixel during the sampling and finally decide for the label with the highest occurrence. A few results are presented in Fig.~\ref{fig:segminc}. As compared to the segmentation from complete images, the segmentation accuracy drops from 95.2\% to 92.8\% for the training set and from 90.7\% to 88.8\% for the validation set. We consider this accuracy drop as minor, because the segmentations inferred for the hidden image parts need not necessarily coincide with the ground truth -- they should only be ``plausible'', which is seen in the figure. Although not the primary goal of this experiment, Gibbs sampling allows at the same time to reconstruct the image content in the hidden parts (\ie in-painting). For this we employ a mean-marginal decision, \ie we average all sampled image values observed during Gibbs sampling. Although the results are sometimes not perfect (see the last row in Fig.~\ref{fig:segminc}), it is however enough to infer reasonable segmentations.

\section{CONCLUSION}
We propose an alternative learning approach for variational autoencoders. For this we view VAEs as decoder--encoder pairs and derive a symmetric learning formulation inspired by game theory, which leads to a simple learning algorithm for finding a Nash equilibrium. We prove its uniqueness under fairly general assumptions. The proposed method can be applied for various learning scenarios and for models with complex, possibly structured latent spaces. This includes implicit distributions in the latent space as well as discrete latent variables. We show experimentally that the models learned by this method are comparable to those obtained by ELBO learning and demonstrate its applicability for tasks that are not accessible by standard VAE learning.  
\subsubsection*{Acknowledgements}
We would like to thank our colleagues Tomas Werner and Denis Barucic for their continued interest in this work and their valuable comments and discussions which helped to improve the manuscript. We also thank the reviewers for their critical remarks, which encouraged us to present more experiments and to resolve remaining unclarities. 
B.F. gratefully acknowledges support by the Czech OP VVV project ”Research Center for Informatics” (CZ.02.1.01/0.0/0.0/16019/0000765).
D.S. was supported by the German Federal Ministry of Education and Research (BMBF) project 01/S18026A-F and by the German Federal Ministry for Economic Affairs and Climate Action (BMWK) project 01MN23021A.
A.S. was supported by the Czech Science Foundation grant GA24-12697S. The authors would like to thank the Center for Information Services and HPC (ZIH) at TU Dresden for providing computing resources.
\bibliography{strings,vae,sym-vae,ws}
\bibliographystyle{plainnat}
\clearpage
\appendix
\section{PROOFS} \label{app:unique}
In this section we provide proofs of formal claims regarding uniqueness and consistency-enforcement. 
\PropUnique*
\begin{proof}
We repeat here the model assumptions \eqref{eq:expf-lifted} for convenience
\begin{subequations} \label{eq:expf_pq}
 \begin{align}
  p_u(x, z) &= \pi(z) \exp\bigl[
 \scalp{\phi(x,z)}{u} - A(u)\bigr] \\
 q_v(x, z) &= \pi(x) \exp\bigl[
 \scalp{\psi(x,z)}{v} - B(v)].
 \end{align}
\end{subequations}
Our proof relies on the classic result of \citep{Rosen:Econ1965}, who shows that games satisfying {\em diagonal strict concavity} (DSC), a condition stronger than concavity, have unique Nash equilibria. 

Since the log-partition function of an exponential family is convex in its natural parameters, it follows that the game utilities are concave in their own strategies. A sufficient condition for the stronger DSC criterion is that the symmetrised Jacobian of the mapping
 \begin{equation}
  \begin{bmatrix}
   u \\
   v
  \end{bmatrix}
  \mapsto
  \begin{bmatrix}
   \nabla_u L_p(u, v) \\
   \nabla_v L_q(u, v)
  \end{bmatrix}
\end{equation}
is negative definite. The most convenient way to prove this condition is to ``dualise'' the game. Maximising $L_p(u,v)$ w.r.t.~$u$ is equivalent to finding the exponential family model, whose expected sufficient statistic $\Ex_{p_u}[\phi(x,z)]$  coincides with $\Ex_{q_v}[\phi(x,z)]$. This follows from
\begin{multline}
 \nabla_u \Ex_{q_v(x, z)}\log p_u(x,z) = \\ 
 \Ex_{q_v(x, z)}[\phi(x,z)] - \Ex_{p_u(x,z)}[\phi(x,z)].
\end{multline}
The corresponding dual task reads
\begin{subequations}\label{eq:dual-task}
\begin{align}
 & F_p(p) = \sum_{x,z} p(x,z) 
 \bigl[ \log p(x,z) - \log \pi(z) \bigr] 
 \rightarrow \min_p \\
 & \text{s.t.} 
  \left.\begin{cases*}
  \Ex_{p}[\phi(x,z)] = \Ex_{q}[\phi(x,z)]\\
  \sum_{x,z} p(x, z) = 1 .
  \end{cases*}\right.
\end{align}
\end{subequations}
This can be seen by noticing that \eqref{eq:dual-task} is a convex optimisation task with linear constraints. Hence, we can apply Fenchel duality
\begin{equation} \label{eq:fenchel}
 \inf_p\bigl\{F_p(p) \bigm| Ap = b \bigr\} = 
 \sup_\gamma\bigl\{\scalp{b}{\gamma} - F^*_p(A^T\gamma)\bigr\} ,
\end{equation}
where $F^*_p$ denotes the Fenchel conjugate function of $F_p$. For our case, we have $b=( \Ex_{q}[\phi(x,z)], 1)$ and the corresponding dual variables $\gamma = (u,\lambda)$. The Fenchel conjugate of the function $f(p) = p\log p - p \log \pi$ is $f^*(w) = \pi e^{w-1}$. Substituting all terms in the rhs of \eqref{eq:fenchel} and solving for $\lambda$, we get the task $\Ex_{q_v(x, z)} \log p_u(x, z)\rightarrow\max_u$. 

Applying the same dualisation for $L_q(u,v)$, we obtain the following ``dual''  game. The strategy of the first player is represented by $p(x, z)$ and the strategy of the second player is represented by $q(x,z)$. The utility functions $-F_p(p)$ and $-F_q(q)$ of the players depend on their respective strategy only. The game has additional linear constraints, where we assume existence of an interior feasible point $(p,q)$. 
The assertion of the theorem follows from Theorems 3,4,9 in \citep{Rosen:Econ1965}, if we prove that the symmetrised Jacobian of the mapping
 \begin{equation}
  \begin{bmatrix}
   p \\
   q
  \end{bmatrix}
  \mapsto
  \begin{bmatrix}
   \nabla_p F_p(p) \\
   \nabla_q F_q(q)
  \end{bmatrix}
\end{equation}
is positive definite. This is trivial since the Jacobian is diagonal with elements $1/p(x,z)$ in the first half of the diagonal and elements $1/q(x,z)$ in its second half. 
\end{proof}

\PropConsistency*
\begin{proof}
For completeness, we include the fact that $L'_p$ is an alternative form of the ELBO. It is verified as follows:
\begin{subequations}
\begin{align}
& \log p_\theta(x) - \kldiv{q_\varphi(z\cond x)}{p_\theta(z\cond x)} \\
& = \log p_\theta(x)  - \E_{q_\varphi(z\cond x)} \Big[\log \frac{q_\varphi(z\cond x)}{p_\theta(z\cond x)}\Big]\\
& = \E_{q_\varphi(z\cond x)} \Big[ \log p_\theta(x) - \log \frac{q_\varphi(z\cond x)}{p_\theta(z\cond x)}\Big]\\
& = \E_{q_\varphi(z\cond x)} \Big[ \log \frac{p_\theta(x) p_\theta(z\cond x)}{q_\varphi(z\cond x)}\Big]\\
& = \E_{q_\varphi(z\cond x)} \Big[ \log \frac{p_\theta(x|z) \pi(z)}{q_\varphi(z\cond x)}\Big]\\
& = \E_{q_\varphi(z\cond x)} \Big[ \log p_{\theta}(x|z) \Big] - \kldiv{q_\varphi(z\cond x)}{\pi(z)} \notag.
\end{align}
\end{subequations}
Therefore,
\begin{align}
    L_p' = L_p - \E_{\pi(x)} [\kldiv{q_\varphi(z\cond x)}{\pi(z)}].
\end{align}
Therefore, for a fixed $\varphi$, utilities $L_p'$ and $L_p$ share all local and global minima in $\theta$.
It is straightforward to see that $(\theta_*, \phi_*)$ is an equilibrium of the game with utilities $L_p'$ and  $L_q'$ iff it is an equilibrium of the game with utilities~\eqref{eq:utility-n}.
\end{proof}

\section{WAKE SLEEP}\label{A:WS}
In this section we give a brief overview of the original wake-sleep (WS) algorithm and follow-up works.

\citet{Hinton:Science1995} considered a multilayer network of stochastic neurons.
The “recognition” (encoder) connections are used to convert the input vector into a representation in one or more layers of hidden units.
The “generative” (decoder) connections are then used to reconstruct an approximation to the input vector from its underlying representation.
In the wake phase of WS, given an observed sample $x$ from the training dataset, a sample of hidden states $z$ is obtained from the encoder network and the decoder is learned on the joint sample $(x,z)$. In the sleep phase a joint sample is drawn from the decoder model and the encoder is learned. 

The model was initially assuming binary units and factorised encoder and decoder. In case of a hierarchical encoder--decoder model, the learning decouples over layers and no back-propagation is needed. Extended to a deep exponential family model~\citep{Vertes-18}, it is equivalent to a hierarchical VAE with the reverse encoder structure. 

\citet{Bornschein-15} et al.~uses importance sampling, similar to IWAE~\citep{Burda-15}, to tighten the bounds for the decoder and introduces a wake-phase (importance weighted) update of the encoder, tightening the ELBO (as in VAE) as well. 

\citet{Vertes-18} and \citet{Wenliang-20} showed that the encoder in WS can be specified implicitly by its mean parameters, which allows for non-conditionally independent encoders. This makes encoders more flexible so that higher quality decoder can be trained but impairs inference. 

The advantage of not requiring differentiation through discrete sampling has been explored by~\citet{Le-20} for models with stochastic control flow.

To our knowledge, prior work has neither extended WS to semi-supervised setting nor discussed the question of {\em why} it is a reasonable algorithm.
The only analysis attempt by~\citet{Ikeda-98} is limited to a strictly consistent encoder-decoder in a simple special case.

\section{(DIS-)SIMILARITIES TO ELBO}\label{A:kingma}
In this section we elaborate on similarities and difference between symmetric learning and ELBO learning in unsupervised as well as semi-supervised case \citep{Kingma-14}.

\paragraph{Unsupervised}
Recall, that in the unsupervised case we consider utility functions
\begin{align}\label{A:unsupervised-Lpq}
    & L_p(\theta, \varphi) = \Ex_{\pi(x)} \Ex_{q_\varphi (z\cond x)}[\log p_\theta(x,z)], \nonumber \\
    & L_q(\theta, \varphi) = \Ex_{p_\theta(x,z)} [\log q_\varphi(z\cond x)].
\end{align}
As discussed in~\cref{P:consistency}, the decoder utility can be equivalently replaced with the common ELBO $L_{B}(\theta, \phi)$ (both have the same dependence on $\theta$). The difference to VAE of~\citet{Kingma:ICLR14} is therefore only in the encoder learning. In VAE the encoder is learned to tighten ELBO, \ie to minimise the so-called {\em reverse} KL divergence in the expectation over the data distribution: 
\begin{align}
\textstyle \E_{\pi(x)}\big[ \kldiv{q_\varphi(z|x)}{p_\theta(z|x)} \big].
\end{align}
In the equilibrium learning, minimising $L_q$ in~\eqref{A:unsupervised-Lpq} \wrt encoder is equivalent to minimising
\begin{align}
\textstyle \E_{p_\theta(x)} \big[ \kldiv{p_\theta(z|x)}{q_\varphi(z|x)} \big],
\end{align}
which is a {\em forward} KL divergence between the same conditional distributions, and the expectation is over the generative model $p_\theta(x) = \sum_{z}p_{\theta}(z)p_{\theta}(x|z)$. 
The choice of the encoder as the true posterior, $q(z|x) = p(z|x)$, when possible (i.e.\ for consistent models), is optimal to both ELBO and symmetric learning. But in general, $L_q$ leads to different preferred solutions.

\paragraph{Semi-Supervised}
Semi-supervised learning of VAE was previously considered by~\citet{Kingma-14}. It can be seen that the hierarchical model~\eqref{HVAE-factorisation} is a generalisation of the generative model of~\citet{Kingma-14}: the state $z$ consists of two parts $(z_0,z_1)$, where $z_0$ is the image label, available only for a part of images. 
Similar to unsupervised case, when learning the decoder for a fixed encoder, the learning objective \cite[Eq. 8]{Kingma-14} is equivalent to our $L_p$. 

Only the learning of encoder differs. In their formulation the encoder minimises
\begin{align}
    & \E_{\pi(x)}\kldiv{q(z|x)}{p(z|x)} \\
    & + \E_{\pi(x,z_0)}\kldiv{q(z_{1}|x,z_0)}{p(x,z)} \notag \\
    & - \alpha \E_{\pi(x,z_0)} \log q(z_0|x) \notag,
\end{align}
where $\alpha$ is an empirical coefficient. In case when there are no unlabelled pairs, the first term disappears and the ELBO learning approach~\citep{Kingma-14} decouples into learning of a conditional VAE (decoder and encoder conditioned on $z_0$: $p(x|z_1,z_0)$, $q(z_1|x,z_0)$) and {\em an independent} discriminative learning of the encoder part $q(z_0|x)$ from the labelled data only. Thus, the generative counterpart of the model has no effect on learning of the recognition part (unless there is a parameter sharing).

In our formulation the encoder maximises
\begin{align}
 \E_{p(x,z)}\log q (z|x) + \E_{\pi(x,z_0)} \log q (z_0|x).
\end{align}
This objective is more homogeneous because both terms correspond to forward KL divergences. When there are no unlabelled training pairs, the objective stays the same and the encoder part $q(z_0|x)$ still needs to fulfil two goals: to approximate the posterior of the decoder $p(z_0|x)$ (in the expectation over the generated distribution $p(x)$, like in the unsupervised case) and to approximate the empirical distribution $\pi(z_0|x)$ (in the expectation over $\pi(x)$). A weighting coefficient might be appropriate here as well to balance the two objectives. 
Our semi-supervised MNIST experiment in~\cref{sec:exp} with utilities~\eqref{exp:semi-MNIST} shows that even when switching off the discriminative counterpart, the encoder still efficiently learns to classify.

\section{LEARNING MODELS WITH IMPLICIT MARGINALS} \label{app:segm-limiting}
Here we give a more detailed derivation of the learning in situations, where a joint model is given by means of its conditional distributions only, \ie marginal distributions are given implicitly. In particular, we used it in our experiments with CelebA to define and learn $p(x,s\cond z)$, where $x$ are images, $s$ are segmentations, and $z$ are latent variables. Since everything is conditioned on $z$ we will omit it for clarity and use $x$ and $s$ as variables of interest to be inline with our experiments. 

With the above agreement, we want to learn two conditional probability distributions $p_\theta(x\cond s)$ and $q_\varphi(s\cond x)$. As both images and segmentations are rather complex, it is desirable to avoid making any assumptions about the prior (marginal) distributions $p(s)$ and $q(x)$. Towards this end, we consider the MCMC process starting from a random state and alternately sampling using $p_\theta(x\cond s)$ and $q_\varphi(s\cond x)$. This process defines two limiting joint distributions, depending on which variable was sampled last:
\begin{equation}
m(s) p_\theta(x\cond s) \ \ \ \text{and} \ \ \ m(x) q_\varphi(s\cond x),
\end{equation}
where $m(x)$ and $m(s)$ are solutions to the stationary equations
\begin{subequations}\label{eq:appim_constr}\begin{align}
    m(x) &= \sum_s p_\theta(x\cond s)m(s) \\
    m(s) &= \sum_x q_\varphi(s\cond x)m(x).
\end{align}\end{subequations}
It is natural to consider the mixture of these two limiting distributions
\begin{equation}\label{eq:appim_impl}
    m(x,s) = \frac{1}{2} \Bigl[m(s) p_\theta(x\cond s) + m(x) q_\varphi(s\cond x)\Bigr] ,
\end{equation}
as we suggest in \eqref{eq:implicit}. Our goal therefore will be to maximise the likelihood of the data $\pi(x,s)$ under this mixture joint model. The likelihood can be lower-bounded w.r.t. mixture components as 
\begin{multline}\label{eq:appim_d1}
    \mathbb E_{\pi(x,s)}\log m(x, s) \\
    \geq \frac{1}{2}\Bigl[
        \mathbb E_{\pi(s)} \log m(s) + \mathbb E_{\pi(x,s)} \log p_\theta(x\cond s) + \\
    \mathbb E_{\pi(x)} \log m(x) + \mathbb E_{\pi(x,s)} \log q_\varphi(s\cond x) \Bigr] .
\end{multline}
Note that this lower bound is tight if the mixture components coincide, \ie $p_\theta(x\cond s)$ and $q_\varphi(s\cond x)$ are consistent.
The terms in \eqref{eq:appim_d1} corresponding to $p_\theta$ and $q_\varphi$ are tractable under assumption \eqref{eq:cond1}.
However, $m(x)$ and $m(s)$ are not given in closed form and depend on both $\theta$ and $\varphi$. We approximate their defining equations~\eqref{eq:appim_constr} as 
\begin{subequations}\label{eq:appim_constr-pi}\begin{align}
    m_\theta(x) &= \sum_s p_\theta(x\cond s)\pi(s) \\
    m_\varphi(s) &= \sum_x q_\varphi(s\cond x)\pi(x)
\end{align}\end{subequations}
and use these expressions in the mixture model~\eqref{eq:appim_impl}. With this approximation, \eqref{eq:appim_d1} sums the data likelihood terms with respect to separate model components $p_\theta(x\cond s)$, $m_\theta(x)$, $q_\varphi(s\cond x)$ and $m_\varphi(s)$. Hence, optimising this sum decouples into optimising the two objectives
\begin{align}
    L_p &= \mathbb E_{\pi(x,s)} \log p_\theta(x\cond s) + \mathbb E_{\pi(x)} \log m_\theta(x),\notag \\
    L_q & = \mathbb E_{\pi(x,s)} \log q_\varphi(s\cond x) + \mathbb E_{\pi(s)} \log m_\varphi(s)
\end{align}
independently in $\theta$ and $\varphi$, respectively. It remains only to explain how to handle $\log m_\theta(x)$ and $\log m_\varphi(s)$, which are still intractable. Substituting \eqref{eq:appim_constr-pi} and introducing a lower bound for $\log m_\theta(x)$ w.r.t.~summation over $s$ gives
\begin{multline}\label{eq:appim_d2}
    \mathbb  E_{\pi(x)} \log m_\theta(x) \geq 
     \mathbb E_{\pi(x)} \mathbb E_{q_\varphi(s\cond x)} \Bigl[\log p_\theta(x\cond s)
    + \\ \log \pi(s) - \log q_\varphi(s \cond x)\Bigr].
\end{multline}
If we consider the equilibrium learning approach, the objective $L_p$ is to be optimised only w.r.t.~its own parameters $\theta$, and therefore we can drop $\log \pi(s)$ and $\log q_\varphi(s \cond x)$ terms. Applying similar steps to $\Ex_{\pi(s)} \log m_\varphi(s)$ leads to the following effective equilibrium learning objectives:
\begin{multline}
    \tilde L_p(\theta, \varphi) = \Ex_{\pi(x,s)} \log p_\theta(x\cond s) + \\
     + \Ex_{\pi(x)}\Ex_{q_\varphi(s\cond x)} \log p_\theta(x\cond s),
\end{multline}
\begin{multline}
    \tilde L_q(\theta, \varphi) = \Ex_{\pi(x,s)} \log q_\varphi(s\cond x) + \\
     + \Ex_{\pi(s)}\Ex_{p_\theta(x\cond s)} \log q_\varphi(s\cond x) .
\end{multline}

Note that the first terms in these utilities correspond to the pseudo-likelihood objective, whereas the mutual completion in the second terms additionally enforces consistency.

\section{ADDITIONAL DETAILS FOR MNIST EXPERIMENTS} \label{app:mnist-impl}
\begin{figure}[t]
    \centering
    \includegraphics[width=0.8\linewidth]{./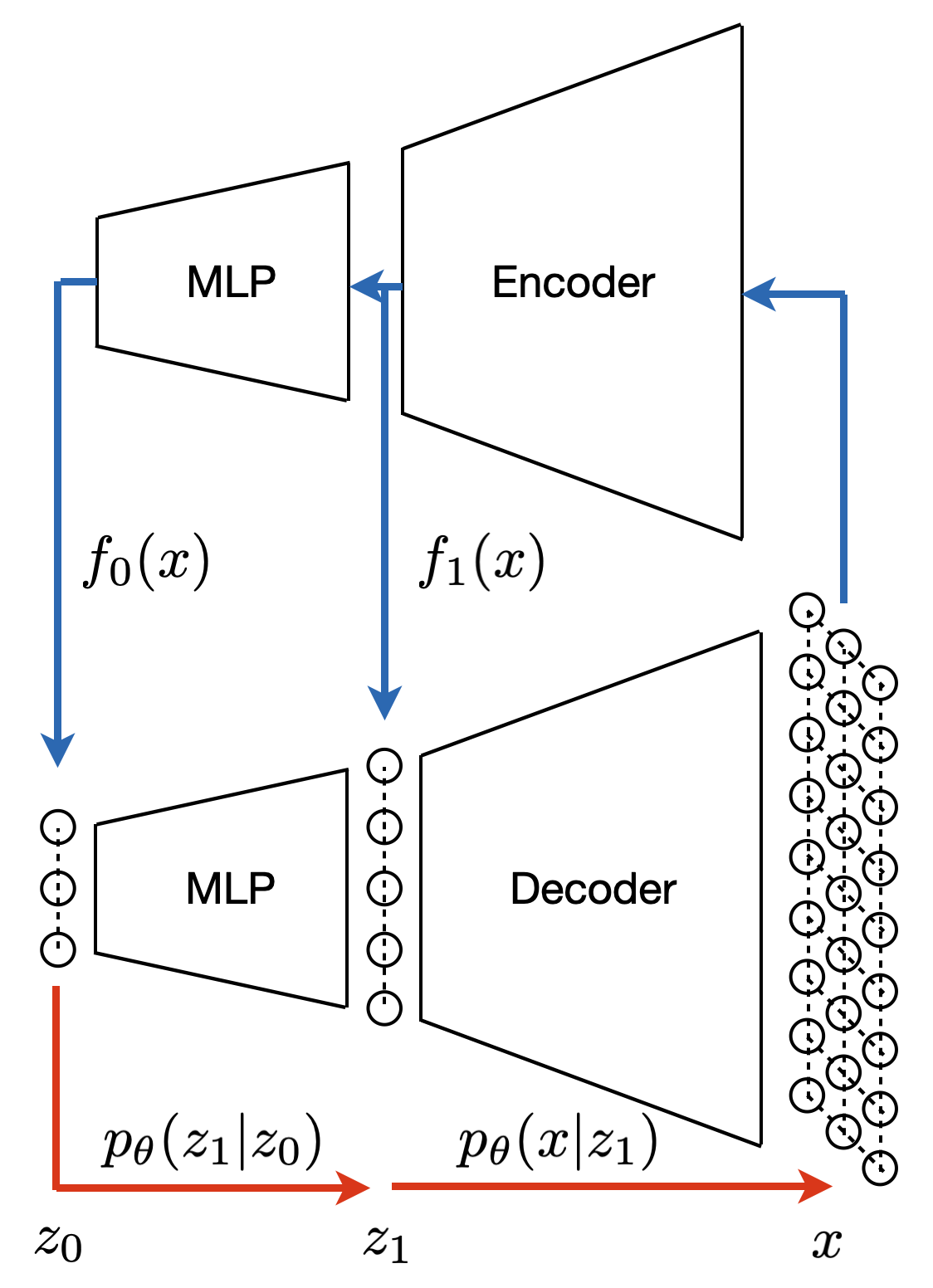}
    \caption{\label{fig:mnist-hvae} MNIST network architecture.}
\end{figure} 

\begin{figure}[t]
    \setlength{\tabcolsep}{0pt} 
    \begin{tabular}{cc}
    \includegraphics[width=\linewidth]{./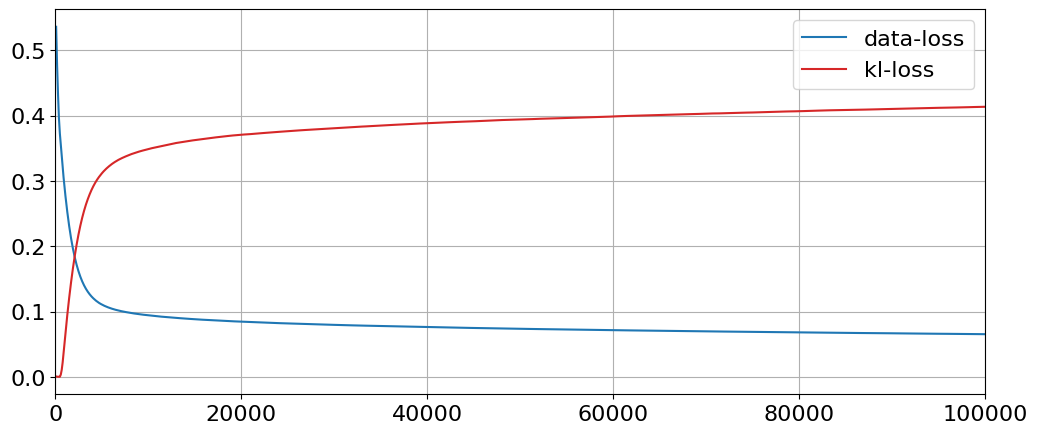}\\
    \includegraphics[width=\linewidth]{./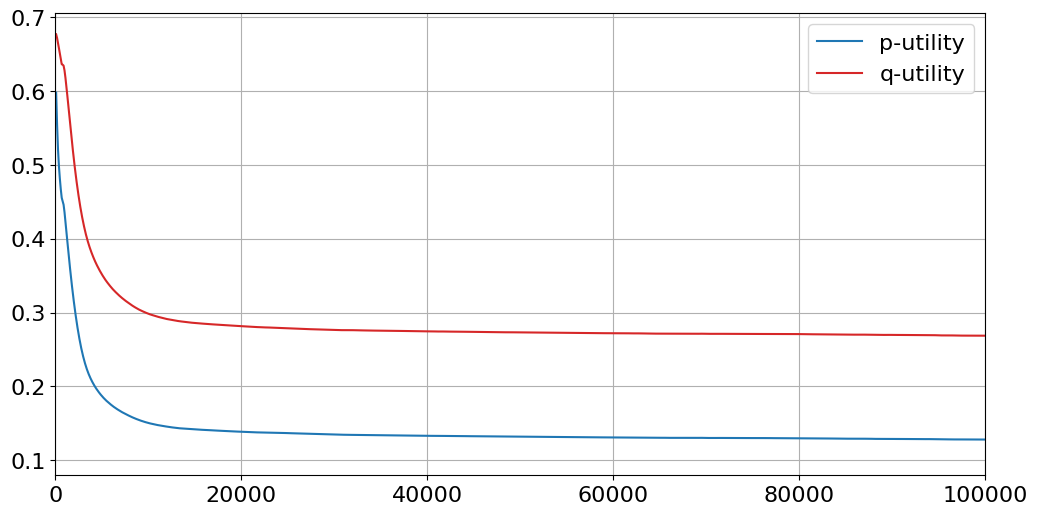}\\
    \end{tabular}
    \caption{\label{fig:mnist-hvae2} MNIST training with ELBO and symmetric learning. {\em Top}: data-term and KL-term for ELBO learning, {\em bottom}: negative utilities for symmetric learning.}
    \vskip0.5\baselineskip
\end{figure}
\begin{figure*}[t]
    \hspace{4ex}
    \includegraphics[width=0.45\textwidth]{./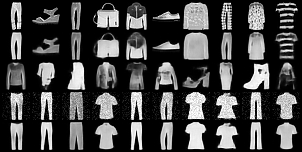}\hspace{2ex}
    \includegraphics[width=0.45\textwidth]{./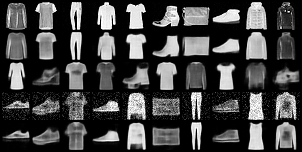}
    \caption{\label{fig:fmnist1} Fashion MNIST. Left: ELBO learning, right: symmetric learning. In each image: top row: original images, second row: means of the reconstructed images, i.e.~$\mu(z_1)$ with $z_1\sim q(z_1\cond x)$, third row: images generated from random codes (means visualised), fourth row: sampling from limiting distribution including image noise, last row: sampling from limiting distribution, means visualised.}
\vskip0.5\baselineskip
\begin{tabular}{ccc}
    & Random Latent Codes & Limiting Distribution \\[5pt]
    \begin{turn}{90} 
    \centering
    \ \ \ \ \ \ \ ELBO
    \end{turn} & 
    \includegraphics[width=0.45\textwidth]{./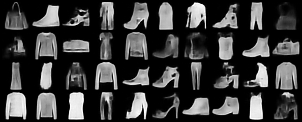} &
    \includegraphics[width=0.45\textwidth]{./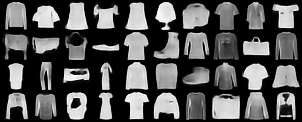} \\
    & $ \text{FID} = 32.37$ & $\text{FID} = 48.89$ \\[5pt]
    \begin{turn}{90} 
    \centering
    \ \ \ \ \ \ \ Symmetric
    \end{turn} & 
    \includegraphics[width=0.45\textwidth]{./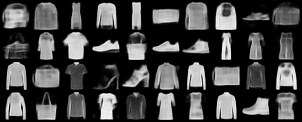} &
    \includegraphics[width=0.45\textwidth]{./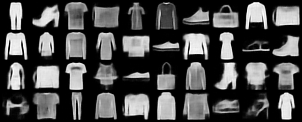} \\
    & $\text{FID} = 40.57$ & $\text{FID} = 37.85$ \\
\end{tabular}
\caption{\label{fig:fmnist2} Fashion MNIST. FID scores and images generated from random latent codes and from  limiting distributions of models learned by maximising ELBO and by symmetric equilibrium learning (images are shown by means for better visibility).}
\end{figure*}
Here, we provide additional implementation details for the HVAE models used by symmetric learning and by ELBO optimisation in the first MNIST experiment. The first model variant is defined by the decoder 
$\textstyle p_\theta(z_0,z_1,x)=p(z_0)p_\theta(z_1\cond z_0)p_\theta(x\cond z_1)$
and the encoder
$\textstyle  q_\varphi(z_0,z_1,x)=\pi(x)q_\varphi(z_0\cond x)q_\varphi(z_1\cond z_0, x),$
where $p(z_0)$ is uniform and $\pi(x)$ is the data distribution.
The network architecture is shown in \cref{fig:mnist-hvae}. The one-dimensional components are connected by a Multi-layer Perceptron (MLP) architecture. We used two hidden layers, $600$ hidden units each in our MLPs. Connections between $z_1$ and $x$ are implemented by standard convolutional encoder/decoder architectures with decreasing and increasing spatial resolutions respectively. Both encoder and decoder have 6 hidden layers, connected by 2D-convolution operations. In order to effectively reduce the spatial dimension some convolutions are performed with strides. We used the $\tanh$ activation function everywhere. The network weights are learned using the Adam-optimiser.

The hierarchical decoder consists of two ``separate'' networks, an MLP and a decoder, representing $p_\theta(z_1\cond z_0)$ and $p_\theta(x\cond z_1)$ respectively. The encoder corresponding to the direct factorisation order (shown in the figure) is a multi-head network. The common part is an encoder, which produces intermediate features, whereas the heads are an MLP for $f_0(x)$ and a single fully connected layer for $f_1(x)$. Two network outputs $f_0(x)$ and $f_1(x)$ serve as multiplier to the hierarchical decoder model, so $q_\varphi(z_0) = f_0(x)$ and $q_\varphi(z_1\cond z_0,x)\propto p_\theta(z_1\cond z_0)\cdot f_1(x)$. For the reverse factorisation order we keep the hierarchical encoder architecture basically the same but split it into two separate networks: the encoder for $q_\varphi(z_1\cond x)$ and the MLP for $q_\varphi(z_0\cond z_1)$.

The learning curves for losses/utilities are shown in \cref{fig:mnist-hvae2} for ELBO learning and symmetric learning respectively as a function of gradient update steps. For better clarity all values are normalised by the number of corresponding elements, e.g.~we show the per-pixel data-loss in ELBO. It is clearly seen that the convergence behaviours are pretty similar in both cases: all values converge very quickly to almost their final values, followed by a long period in which they change much more slowly. However, we observed that the quality of generated images keeps improving, even after the losses/utilities have almost reached saturation. Hence, we run all our experiments with a small learning rate of $10^{-4}$ for 1M gradient update steps (note: only first 100k steps are shown in \cref{fig:mnist-hvae2} for better visibility). 

We further compare the HVAE models obtained by symmetric learning and by ELBO optimisation by embedding samples for $z_0$ and $z_1$ from (i) the {\em prior} distributions $p(z_0)$, $p_\theta(z_1)$, (ii) the {\em posterior} distributions $q_\varphi(z_0)$, $q_\varphi(z_1)$, and (iii) the {\em limiting} distributions $m_{\theta,\varphi}(z_0)$ and $m_{\theta,\varphi}(z_1)$ for each of the two models by tSNE. \cref{fig:mnist-tsne} shows that all three samples match well for the model learned by symmetric learning. This is however not the case for the model learned by ELBO. 

\begin{figure*}[t]
    \centering
    \includegraphics[width=0.98\textwidth]{./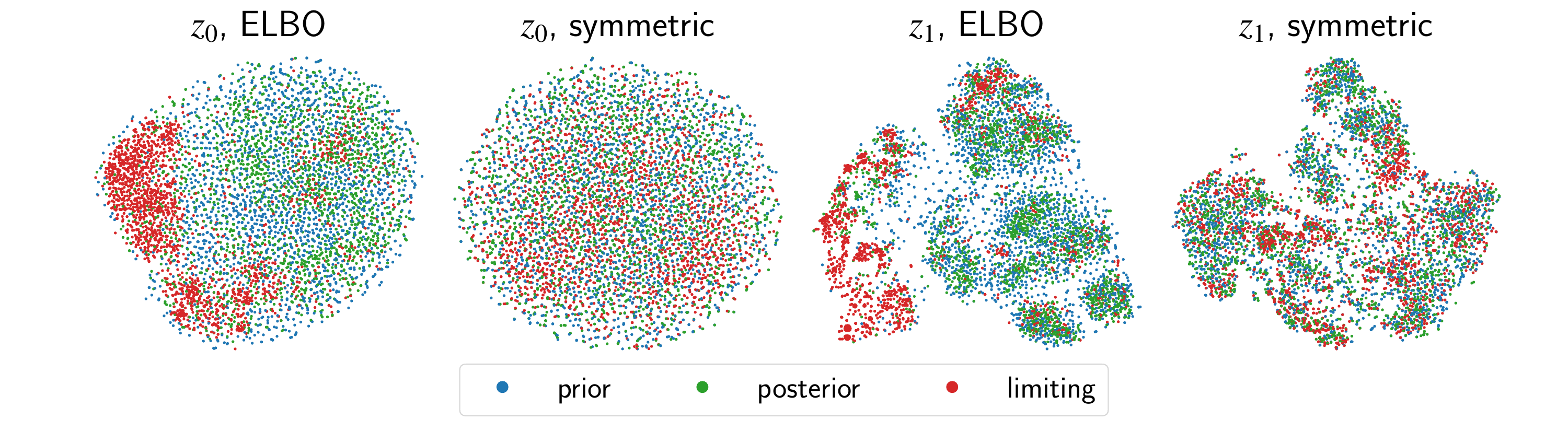}
    \caption{\label{fig:mnist-tsne} MNIST: tSNE embeddings of latent variables $z_0$ and $z_1$ for ELBO maximisation and symmetric learning.}
    \end{figure*}
\section{FASHION MNIST} \label{app:fmnist}
We also tested our approach for HVAE with the direct encoder factorisation order on the Fashion MNIST dataset. The model is exactly the same as the one used in our first MNIST experiment, except:
\begin{itemize}
    \item[--] Images are grey-valued now. We model them by a Gaussian, where the means for all pixels are computed by a network, and the standard deviation is common for all pixels and does not depend on $z$, i.e.~$p_{\theta,\sigma}(x\cond z_1)=\mathcal N(x;\mu_\theta(z_1),\sigma)$. The network architecture for $\mu_\theta(z_1)$ is the same as the decoder in the MNIST experiment, $\sigma$ is learned alongside with the network weights.
    \item[--] We observed that the overall results are slightly better (especially for ELBO), when using ReLU activations in $p(x\cond z_1)$ instead of $\tanh$ used for MNIST.
\end{itemize}
The results are shown in ~\cref{fig:fmnist1,fig:fmnist2}. They confirm our finding, that ELBO and symmetric learning are on par, whereby the latter produces more consistent encoder/decoder pairs.


\end{document}